\renewcommand{\Pr}{\mathop{\mathrm{Pr}}\nolimits}
\newcommand{\ALG}{\mathrm{ALG}}
\newcommand{\OPT}{\mathrm{OPT}}
\renewcommand{\epsilon}{\varepsilon}
\newtheorem{thm}{Theorem}
\newtheorem{dfn}{Definition}
\newtheorem{lemma}{Lemma}
\title{Analyzing the effect of prediction accuracy on the distributionally-robust competitive ratio}
\author[1]{Toru Yoshinaga\thanks{yoshinaga-toru106@g.ecc.u-tokyo.ac.jp}}
\author[1]{Yasushi Kawase\thanks{kawase@mist.i.u-tokyo.ac.jp}}
\affil{The University of Tokyo, Japan}
\date{}
\begin{document}

\maketitle
\begin{abstract}
The field of algorithms with predictions aims to improve algorithm performance by integrating machine learning predictions into algorithm design. 
A central question in this area is how predictions can improve performance, and a key aspect of this analysis is the role of prediction accuracy.
In this context, prediction accuracy is defined as a guaranteed probability that an instance drawn from the distribution belongs to the predicted set.
As a performance measure that incorporates prediction accuracy, we focus on the distributionally-robust competitive ratio (DRCR), introduced by Sun et al.~(ICML 2024).
The DRCR is defined as the expected ratio between the algorithm's cost and the optimal cost, where the expectation is taken over the worst-case instance distribution that satisfies the given prediction and accuracy requirement.
A known structural property is that, for any fixed algorithm, the DRCR decreases linearly as prediction accuracy increases. 
Building on this result, we establish that the optimal DRCR value (i.e., the infimum over all algorithms) is a monotone and concave function of prediction accuracy.
We further generalize the DRCR framework to a multiple-prediction setting and show that monotonicity and concavity are preserved in this setting.
Finally, we apply our results to the ski rental problem, a benchmark problem in online optimization, 
to identify the conditions on prediction accuracies required for the optimal DRCR to attain a target value.
Moreover, we provide a method for computing the critical accuracy, defined as the minimum accuracy required for the optimal DRCR to strictly improve upon the performance attainable without any accuracy guarantee.
\end{abstract}

\section{Introduction}
Online algorithms are typically designed to guarantee performance under worst-case analysis.
A standard measure for evaluating the performance of an algorithm is the \emph{competitive ratio}, defined as the ratio of the cost incurred by an online algorithm to the optimal offline cost in the worst-case instance.
However, an exclusive focus on ensuring a good competitive ratio often becomes overly conservative, resulting in poor performance in practice.
This motivates leveraging predictions to improve performance while preserving worst-case guarantees.
A recent line of work develops this idea through the framework of \emph{algorithms with predictions}~\citep{PSK18, MV20, LV21}.
In this framework, the algorithm receives additional information about the problem instance, often derived from machine learning predictions.
The aim is to leverage predictions to achieve improved performance, while maintaining worst-case guarantees.

In most previous work on algorithms with predictions, algorithms have been evaluated by two types of performance measures: \emph{consistency}, which is the competitive ratio on instances that satisfy the prediction, and \emph{robustness}, which is the competitive ratio over all instances.
Formally, these two measures are defined as follows.
Let $\mathcal{I}$ be the set of all instances for a minimization problem.
A prediction induces an instance subset $\Theta\subseteq\mathcal{I}$ that is consistent with it.
We refer to $\Theta$ as the \emph{consistent set}.
Denote the expected cost incurred by an algorithm $A$ on an instance $I\in\mathcal{I}$ by $\mathrm{ALG}(I)$, and the offline optimal cost by $\mathrm{OPT}(I)$.
The consistency and robustness of algorithm $A$ are defined respectively as
\begin{align}
c(A)=\sup_{I\in\Theta}\frac{\mathrm{ALG}(I)}{\mathrm{OPT}(I)}
\quad\text{and}\quad
r(A)=\sup_{I\in\mathcal{I}}\frac{\mathrm{ALG}(I)}{\mathrm{OPT}(I)}.
\end{align}
These measures capture only two extreme scenarios, namely when the prediction is completely correct and when it is completely unreliable, and thus ignore information about prediction accuracy.
In contrast, machine-learning predictions are often associated with quantified accuracy guarantees.
From a risk management perspective in algorithm design, prediction accuracy plays a crucial role: high accuracy justifies greater reliance on predictions and more aggressive designs, whereas low accuracy necessitates conservative designs based on worst-case guarantees.
Accordingly, in real-world applications, it is necessary to design algorithms that explicitly leverage the available prediction accuracy.

In recent years, \citet{SHCHWB24} proposed a new performance measure that incorporates prediction accuracy, called the \emph{distributionally-robust competitive ratio (DRCR)}.
In their model, an instance is drawn from a distribution within the set of distributions specified by a consistent set $\Theta$ and a prediction accuracy $1-\delta$.
The instance distribution set $\mathcal{D}_{\Theta,\delta}$ includes all distributions on $\mathcal{I}$ such that a randomly drawn instance $I$ belongs to $\Theta$ with probability at least $1-\delta$.
With this notation, DRCR is defined as follows:
\begin{dfn}{(Distributionally-robust competitive ratio~\citep{SHCHWB24})}\label{dfn:DRCR} For problem $\mathcal{I}$, given a consistent set $\Theta$ and prediction accuracy $1-\delta$, the DRCR of algorithm $A$ is defined as
    $$\mathrm{DRCR}_{\Theta,\delta}(A)=\sup_{d\in\mathcal{D}_{\Theta,\delta}}\mathbb{E}_{I\sim d}\left[\frac{\mathrm{ALG}(I)}{\mathrm{OPT}(I)}\right].$$
\end{dfn}

Our primary objective in this study is to establish a theoretical foundation for the prediction accuracy required for an online algorithm to achieve a desired performance level in the DRCR.
These benchmarks, in turn, will inform the sample size or the choice of machine learning methods for constructing predictions when designing algorithms that leverage predictions.
To this end, we investigate the relationship between prediction accuracy and the optimal DRCR, the infimum of the DRCR over all online algorithms under the given prediction and accuracy.
Intuitively, higher prediction accuracy should yield better performance.
We can verify this intuition from the convex-combination representation of the DRCR given by \citet{SHCHWB24}.
\begin{thm}[\citep{SHCHWB24}]\label{thm:DRCRconvexOnelayer}
Fix a consistent set $\Theta$. The DRCR of algorithm $A$ can be expressed as a convex combination of its consistency and robustness, parameterized by the accuracy. That is,
    $$\mathrm{DRCR}_{\Theta,\delta}(A)=(1-\delta)\cdot c(A)+\delta\cdot r(A).$$
\end{thm}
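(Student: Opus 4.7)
The plan is to prove the identity by showing matching upper and lower bounds on $\mathrm{DRCR}_{\Theta,\delta}(A)$, using the single observation that $c(A)\le r(A)$ (which is immediate from $\Theta\subseteq\mathcal{I}$). Throughout, I would abbreviate the performance ratio as $f(I):=\mathrm{ALG}(I)/\mathrm{OPT}(I)$, so that $c(A)=\sup_{I\in\Theta}f(I)$, $r(A)=\sup_{I\in\mathcal{I}}f(I)$, and $\mathrm{DRCR}_{\Theta,\delta}(A)=\sup_{d\in\mathcal{D}_{\Theta,\delta}}\mathbb{E}_{I\sim d}[f(I)]$.

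For the upper bound, I would take an arbitrary $d\in\mathcal{D}_{\Theta,\delta}$ and let $\epsilon:=\Pr_{I\sim d}[I\notin\Theta]\le\delta$. Decomposing $d$ as the mixture $(1-\epsilon)\,d|_{\Theta}+\epsilon\,d|_{\mathcal{I}\setminus\Theta}$ (with the convention that the second component is arbitrary if $\epsilon=0$) gives
\[
\mathbb{E}_{I\sim d}[f(I)] = (1-\epsilon)\,\mathbb{E}_{I\sim d|_{\Theta}}[f(I)] + \epsilon\,\mathbb{E}_{I\sim d|_{\mathcal{I}\setminus\Theta}}[f(I)] \le (1-\epsilon)\,c(A) + \epsilon\,r(A).
\]
Since $r(A)\ge c(A)$, the right-hand side is non-decreasing in $\epsilon$, so it is maximized at $\epsilon=\delta$, yielding the bound $(1-\delta)c(A)+\delta\,r(A)$. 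Taking the supremum over $d$ gives the $\le$ direction.

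For the lower bound, I would use an $\eta$-approximation argument: for any $\eta>0$, pick $I_c\in\Theta$ with $f(I_c)\ge c(A)-\eta$ and $I_r\in\mathcal{I}$ with $f(I_r)\ge r(A)-\eta$, and then form the two-point distribution $d^\star:=(1-\delta)\,\mathbbm{1}_{I_c}+\delta\,\mathbbm{1}_{I_r}$. Since $\Pr_{I\sim d^\star}[I\in\Theta]\ge 1-\delta$, we have $d^\star\in\mathcal{D}_{\Theta,\delta}$, and $\mathbb{E}_{I\sim d^\star}[f(I)]\ge (1-\delta)c(A)+\delta\,r(A)-\eta$. Letting $\eta\downarrow 0$ concludes the matching lower bound. (If $c(A)$ or $r(A)$ equals $+\infty$, I would instead let the approximants diverge appropriately, which is still fine because the claimed identity then also has a $+\infty$ right-hand side when $\delta>0$ or when $\delta=0$ and $c(A)=+\infty$.)

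I do not expect any serious obstacle, since the whole argument is essentially a restatement of the fact that extreme points of $\mathcal{D}_{\Theta,\delta}$ (in the relevant direction) place mass $1-\delta$ on a worst instance of $\Theta$ and mass $\delta$ on a worst overall instance. The only step that requires care is the handling of the suprema, which I would address via the $\eta$-approximation above rather than assuming attainment of $c(A)$ and $r(A)$.
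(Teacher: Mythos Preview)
Your proposal is correct and mirrors the paper's own argument. The paper does not prove Theorem~\ref{thm:DRCRconvexOnelayer} directly (it is cited from \citet{SHCHWB24}), but it proves the hierarchical generalization in Theorem~\ref{thm:DRCRConvexCombination}, and specializing that proof to $n=1$ gives exactly your two steps: an upper bound by decomposing any $d\in\mathcal{D}_{\Theta,\delta}$ over $\Theta$ and its complement and using $c(A)\le r(A)$, and a lower bound via a near-optimal two-point distribution followed by a limit (the paper uses a multiplicative $(1-\epsilon)$ approximation where you use an additive $\eta$, which is immaterial).
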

However, this result alone is not sufficient to characterize the relationship between prediction accuracy and the optimal DRCR.
Changes in prediction accuracy can significantly alter the reliance on predictions in algorithm design and can also substantially change the construction of the algorithm itself.
Therefore, the relationship between prediction accuracy and the optimal DRCR is more complex, and a detailed analysis of this relationship is needed to examine the conditions under which predictions are desirable for algorithm design.

Alongside the structural analysis of the optimal DRCR, we extend the DRCR framework to handle multiple predictions, assuming natural scenarios in which the decision maker has access to multiple predictive sources.
In this setting, we assume a \emph{hierarchical} structure over multiple predictions and their accuracies.
At a high level, the hierarchical structure implies that the predicted instance sets and their associated accuracy guarantees are nested, and that the accuracy guarantee becomes stronger as we allow a larger set of instances.
For example, suppose we want to predict the number of times a person will use the gym in a month.
In this case, suppose we are given the following two predictions: the probability that the person visits the gym $3$--$5$ times in a month is $50\%$, and the probability that the number of visits is in the (wider) range $2$--$10$ is $80\%$.
These two predictions are hierarchical.
Under this hierarchical structure, the structural properties of the optimal DRCR are preserved in the multiple-prediction setting.

\subsection{Our results}
In Section~\ref{sec:DRCRstructures}, we present several structural properties of the DRCR with respect to prediction accuracy.
We first give a formal definition of the optimal DRCR and show that, in the single-prediction setting, it is a monotone and concave function of the prediction accuracy.
This property follows from the convex-combination representation of the DRCR given in Theorem~\ref{thm:DRCRconvexOnelayer}.
Next, we show that in the presence of hierarchical predictions, the DRCR can also be represented as a convex-combination of consistencies and robustness, parameterized by the prediction accuracies.
From these results, we immediately obtain that the optimal DRCR in the hierarchical prediction setting also preserves its monotonicity and concavity with respect to the prediction accuracies.
These structural properties show that if there exist a pair of a consistent set and its accuracy that contains enough information to improve the optimal DRCR, then any such pair with higher accuracy provides even more effective information for improving the optimal DRCR.

In Section~\ref{sec:skirental}, we develop a method for identifying the conditions on predictions required to improve the optimal DRCR for the ski rental problem, a representative benchmark in online optimization.
In our setting, the last day of skiing is predicted via hierarchical prediction intervals. First, we provide a linear programming (LP) formulation that computes an optimal randomized algorithm for this problem.
Then, leveraging this LP, we describe the conditions on the prediction accuracies required for the optimal DRCR to reach the desired value in the form of a system of linear inequalities.
Finally, restricting to the single-interval setting, we present a method for computing the minimum prediction accuracy at which the optimal DRCR is strictly better than the optimal robustness value (i.e., the optimal DRCR without any accuracy guarantees).

\subsection{Related work}
Classical competitive analysis evaluates the performance of online algorithms against worst-case instances, but this often leads to overly pessimistic guarantees.
To address this issue, researchers have explored models that reflect realistic situations, for example by relaxing the assumptions on the input model or by allowing some prior information about future inputs.
This line of research is surveyed in the textbook by \citet{T21}.
In particular, the DRCR framework can be viewed as belonging to the line of work on algorithms with predictions, as it assumes that partial information about future inputs is available via machine-learned predictions.
At the same time, it is closely related to the \emph{diffused adversary model} introduced by \citet{KP00}, where the adversary selects the worst-case distribution from a restricted family known to the decision maker.
The DRCR framework can be seen as a refinement of this perspective, with the assumption of partial knowledge about the distribution class being naturally justified by such predictions.

The ski rental problem is a fundamental problem in online optimization, and numerous variants of it have been studied.
In the classical setting, \citet{KMRS88,KMMO94} proposed deterministic and randomized algorithms that achieve the best possible competitive ratios, respectively.
Within the line of work on algorithms with predictions, many variants of the ski rental problem have been studied~\citep{ACEPS21, ADJKR24, AGP20, B20, BD22, BMS20, BP23, DKTVZ21, DNS23, EK25, GP19, IBB24, MLHLSL23, PSK18, SHCHWB24, SLA23, SLLA23, SR21, SVW25, WLW20, WZ20, ZTCD24}.
Among these works, \citet{SVW25} is most closely related to our study (and to \citet{SHCHWB24}), as it adopts a stochastic input model and explicitly quantifies prediction accuracy.
They incorporate a calibrated probabilistic predictor for whether the skiing duration exceeds the purchasing threshold into the algorithm design and show that the competitive ratio smoothly improves as the predictor becomes more accurate.

\section{Preliminaries}
In this section, we introduce notation for analyzing the DRCR with hierarchical predictions.
Throughout the paper, we use the notation $[n]=\{1,2,\dots,n\}$, where $n$ is a positive integer.
Let $\mathcal{I}$ denote the instance set of an online minimization problem.
An instance $I\in\mathcal{I}$ is randomly drawn from an unknown distribution $d$ over $\mathcal{I}$.
As predictive information, we are given a set of $n$ predictions $(\Theta_i,\delta_i)_{i\in[n]}$.
Each pair $(\Theta_i,\delta_i)$ consists of a consistent set $\Theta_i\subseteq\mathcal{I}$ and a corresponding accuracy parameter $\delta_i\in[0,1]$.
For each $i\in[n]$, the prediction $(\Theta_i,\delta_i)$ guarantees that the unknown distribution $d$ satisfies
$$\Pr_{I\sim d}[I\in\Theta_i]\ge1-\delta_i.$$

We sometimes want to treat only the consistent sets or only the accuracy parameters collectively.
Thus, we denote them by $\bm\Theta\coloneqq(\Theta_i)_{i\in[n]}$ and $\bm\delta\coloneqq(\delta_i)_{i\in[n]}$, respectively.
Using this notation, a collection of $n$ predictions can be simply written as $(\bm\Theta,\bm\delta)$, and we refer to it as a \emph{prediction profile}.
Let $\mathcal{D}_{\bm{\Theta},\bm{\delta}}$ denote the set of all distributions that satisfy the above conditions.
That is,
$$\mathcal{D}_{\bm{\Theta},\bm{\delta}}=\left\{d\mid \Pr_{I\sim d}[I\in\Theta_{i}]\ge1-\delta_i\;\text{for all }i\in[n]\right\}.$$
With this notation, the DRCR for algorithm $A$ with prediction profile $(\bm{\Theta},\bm{\delta})$ is defined as
$$\mathrm{DRCR}_{\bm{\Theta},\bm{\delta}}(A)=\sup_{d\in\mathcal{D}_{\bm{\Theta},\bm{\delta}}}\mathbb{E}_{I\sim d}\left[\frac{\mathrm{ALG}(I)}{\mathrm{OPT}(I)}\right].$$

In our setting, we assign a separate consistency to each prediction.
For each consistent set $\Theta_i$, we define the consistency $c_i$ as
$$c_{i}(A)=\sup_{I\in\Theta_{i}}\frac{\mathrm{ALG}(I)}{\mathrm{OPT}(I)}.$$
Also, as described in Introduction, robustness $r(A)$ is defined as the worst-case performance over all instances.
Here, we set $\Theta_{n+1}\coloneqq\mathcal{I}$ and denote $r(A)=c_{n+1}(A)$, for convenience.
By definition, we have $c_i(A)\ge1$ for all $i\in[n]$ and $r(A)\ge1$.

We say that the prediction profile $(\bm{\Theta},\bm{\delta})$ has a \emph{hierarchical} structure if
$$\Theta_1\subseteq\Theta_2\subseteq\cdots\subseteq\Theta_n\quad\text{and}\quad\delta_1\ge\delta_2\ge\cdots\ge\delta_{n}.$$
If some consecutive predictions have identical consistent sets and accuracy parameters, namely,
$\Theta_i=\Theta_{i+1}$ and $\delta_i=\delta_{i+1}$,
we can merge them into a single prediction without affecting any of our performance measures.
 
In addition to the previous definition $\Theta_{n+1}\coloneqq\mathcal{I}$, we further set $\Theta_0=\emptyset$.
Accordingly, set $\delta_0=1$ (since $\Pr_{I\sim d}[I\in\emptyset]=0$ for all $d$) and $\delta_{n+1}\coloneqq0$ (since $\Pr_{I\sim d}[I\in\mathcal{I}]=1$ for all $d$).
Adding this notation does not break the hierarchical structure.
By the inclusion chain $\Theta_1\subseteq\Theta_2\subseteq\cdots\subseteq\Theta_n\subseteq\Theta_{n+1}=\mathcal{I}$ given by the hierarchical structure, the consistencies and the robustness satisfy $1\le c_1(A)\le c_2(A)\le\dots\le c_n(A)\le r(A)$.
We make frequent use of this property in the proofs.

\medskip

To analyze the effect of prediction accuracy on the optimal DRCR, we study the ski rental problem, a standard benchmark in online optimization.
In this minimization problem, information about when a skier stops skiing is critical for minimizing the total cost.
We therefore leverage hierarchical predictions to estimate the last day of skiing.
As a natural setting for hierarchical predictions, we consider nested prediction intervals.
Formally, this problem is defined as follows.
\begin{dfn}[Ski rental problem with prediction intervals]
    A beginner skier plans to continue skiing until they get bored, and we denote this unknown day by $\tau$. The skier does not have their skis, so they must either rent them for $\$1$ per day or buy them for $\$B$. Here $B$ is an integer greater than $1$. On the first day, the skier is notified of the purchase cost $B$ and a prediction profile $(\bm{\Theta},\bm{\delta})$. Let the instance set $\mathcal{I}$ be the set of positive integers, and assume that the last skiing day $\tau$ follows an unknown probability distribution on $\mathcal{I}$ that satisfies the prediction conditions. Each consistent set $\Theta_i$ is an interval $[\ell_i,u_i]$ where $\ell_i$ is a positive integer and $u_i$ is either a positive integer or $+\infty$. Each accuracy guarantees that $\Pr[\tau\in[\ell_i,u_i]]\ge1-\delta_i$. Here, we assume that $\ell_n\le\cdots\le\ell_1\le u_1\le\cdots\le u_n$. This ensures that the prediction profile has a hierarchical structure. Our goal is to design an algorithm that minimizes the total cost incurred.
\end{dfn}

\section{Structures of the DRCR}\label{sec:DRCRstructures}
In this section, we analyze the structure of the DRCR by establishing its convex-combination property and the concavity of the optimal DRCR with respect to the prediction accuracies.

\subsection{Concavity of the optimal DRCR}\label{sec:concave}
We demonstrate concavity and monotonicity of the optimal DRCR with respect to the prediction accuracy parameter(s) by exploiting its convex-combination representation.

We begin with the single prediction case and then lift the argument to hierarchical predictions.
Fix a prediction $\bm{\Theta}=\{\Theta\}$ and treat the prediction accuracy $\bm{\delta}=\{\delta\}$ as a variable.
As shown in Theorem~\ref{thm:DRCRconvexOnelayer}, the DRCR with a single prediction is a linear function of $\delta$ for every algorithm $A$.
Define the optimal DRCR as a function of $\delta$ as follows:
$$\mathrm{DRCR}^*(\delta)\coloneqq\inf_{A}\mathrm{DRCR}_{\bm{\Theta},\bm{\delta}}(A)=\inf_{A}\{(1-\delta)\cdot c(A)+\delta\cdot r(A)\}.$$
By a well-known result in convex analysis, $\mathrm{DRCR}^*(\delta)$ is the pointwise minimum of linear functions and is therefore concave (see, e.g., \cite{B04}).
Furthermore, since $c(A)\le r(A)$ holds for every algorithm, the coefficient of $\delta$ is non-negative.
Hence, $\mathrm{DRCR}^*$ is a non-decreasing function of $\delta$.

\begin{thm}\label{thm:DRCRConcave}
     Given a fixed consistent set $\Theta$, the optimal DRCR is concave and monotone non-decreasing with respect to $\delta$.
\end{thm}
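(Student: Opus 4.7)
The plan is to combine Theorem~\ref{thm:DRCRconvexOnelayer} with the standard fact from convex analysis that a pointwise infimum of affine functions is concave. First, for a fixed algorithm $A$, Theorem~\ref{thm:DRCRconvexOnelayer} gives the identity
$$\mathrm{DRCR}_{\Theta,\delta}(A) = (1-\delta)\,c(A) + \delta\,r(A) = c(A) + \delta\bigl(r(A)-c(A)\bigr),$$
which is affine (indeed, linear with constant intercept) in the scalar variable $\delta \in [0,1]$. Taking the infimum over all algorithms,
$$\mathrm{DRCR}^*(\delta) = \inf_{A}\bigl\{(1-\delta)\,c(A) + \delta\,r(A)\bigr\}$$
is then a pointwise infimum of a family of affine functions of $\delta$, and is therefore concave on $[0,1]$ by the standard result (see, e.g., \cite{B04}).

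For monotonicity, I would use the rewriting $\mathrm{DRCR}_{\Theta,\delta}(A) = c(A) + \delta(r(A)-c(A))$ together with the observation that $r(A) \ge c(A)$ for every algorithm $A$, since the supremum defining $r(A)$ is taken over the larger set $\mathcal{I} \supseteq \Theta$. Thus each function $\delta \mapsto \mathrm{DRCR}_{\Theta,\delta}(A)$ has a non-negative slope, so it is non-decreasing in $\delta$. The pointwise infimum of a family of non-decreasing functions is again non-decreasing, which yields the monotonicity of $\mathrm{DRCR}^*$.

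Honestly, there is no substantive obstacle here: both properties are immediate consequences of the convex-combination representation and the elementary properties of pointwise infima, and the argument holds without any restriction on the (possibly uncountable) class of algorithms ranged over by the infimum. The only point worth stating explicitly in the write-up is the inequality $c(A) \le r(A)$, which justifies the sign of the slope and thereby converts the general concavity statement into the stronger monotone-concavity statement.
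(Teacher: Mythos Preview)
Your proposal is correct and follows essentially the same approach as the paper: use Theorem~\ref{thm:DRCRconvexOnelayer} to express $\mathrm{DRCR}_{\Theta,\delta}(A)$ as an affine function of $\delta$, invoke the standard convex-analysis fact that a pointwise infimum of affine functions is concave, and use $c(A)\le r(A)$ to get the non-negative slope and hence monotonicity. The paper presents exactly this argument in the paragraph preceding the theorem statement.
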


\subsection{Convex combination representation for hierarchical predictions}
\citet{SHCHWB24} demonstrated that, when a single prediction is given, the DRCR can be expressed as a convex combination of consistency and robustness (see \Cref{thm:DRCRconvexOnelayer}).
We extend this result by proving that a similar representation is also possible when hierarchical predictions are given.
The key tool in our proof is Abel's lemma on summation by parts, stated below.

\begin{lemma}[\cite{A26}]\label{lem:abel}
    For any sequences $\{a_i\}_{i=0}^{n+1}$ and $\{b_i\}_{i=0}^{n+1}$, 
$$\sum_{i=1}^{n+1} a_i b_i
=\sum_{i=1}^{n+1}a_i\cdot b_{n+1}-\sum_{i=1}^{n}\left(\sum_{j=1}^{i}a_j\right)(b_{i+1}-b_i).$$
\end{lemma}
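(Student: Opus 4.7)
The plan is to prove Abel's summation by parts through a standard telescoping argument, which is the discrete analogue of integration by parts. The proof is elementary, so the real work lies in careful index bookkeeping rather than in any substantive idea.

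First, I would express each $b_i$ for $i\in\{1,\ldots,n+1\}$ via the telescoping identity
$$b_i = b_{n+1} - \sum_{k=i}^{n}(b_{k+1}-b_k),$$
with the convention that the empty sum (arising when $i=n+1$) equals zero. Substituting this into the left-hand side of the claim and distributing the product yields
$$\sum_{i=1}^{n+1} a_i b_i = \Bigl(\sum_{i=1}^{n+1} a_i\Bigr)\,b_{n+1} \;-\; \sum_{i=1}^{n+1} a_i \sum_{k=i}^{n}(b_{k+1}-b_k).$$

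The next step is to swap the order of summation in the double sum. Since the constraints $1\le i\le n+1$ and $i\le k\le n$ are equivalent to $1\le i\le k\le n$, I obtain
$$\sum_{i=1}^{n+1} a_i \sum_{k=i}^{n}(b_{k+1}-b_k) = \sum_{k=1}^{n}(b_{k+1}-b_k)\sum_{i=1}^{k}a_i.$$
Relabeling $k$ as $i$ and the original $i$ as $j$ in this expression recovers exactly the second term on the right-hand side of the lemma, which completes the argument.

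The main, and essentially only, obstacle is to verify that the $i=n+1$ contribution to the double sum vanishes (because the inner sum over $k\in\{n+1,\ldots,n\}$ is empty), which is precisely what makes the swapped ranges work out to $1\le i\le k\le n$. Beyond this one bookkeeping check, no deeper ingredients are required: the proof uses only the telescoping representation of $b_i$ and a Fubini-style exchange of finite sums.
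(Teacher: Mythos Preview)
Your proof is correct and is the standard telescoping/Fubini argument for Abel's summation formula. Note that the paper itself does not prove this lemma: it is simply stated with a citation and then invoked in the proof of Theorem~\ref{thm:DRCRConvexCombination}, so there is no paper proof to compare against beyond observing that your argument is the classical one.
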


Using this lemma, we obtain the following convex-combination representation.
\begin{thm}
    \label{thm:DRCRConvexCombination}
    Given a hierarchical prediction profile $(\bm{\Theta},\bm{\delta})$, DRCR of algorithm $A$ can be expressed as
    \begin{align}
        \mathrm{DRCR}_{\bm{\Theta},\bm{\delta}}(A)&\coloneqq\sup_{d\in\mathcal{D}_{\bm{\Theta},\bm{\delta}}}\mathbb{E}_{I\sim d}\left[\frac{\mathrm{ALG}(I)}{\mathrm{OPT}(I)}\right]
        =\sum_{i=1}^n(\delta_{i-1}-\delta_i)\cdot c_i(A)+\delta_n\cdot r(A).
    \end{align}
\end{thm}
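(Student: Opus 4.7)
The plan is to establish both an upper bound and a matching (near-)attainability lower bound for the supremum defining $\mathrm{DRCR}_{\bm{\Theta},\bm{\delta}}(A)$, with \Cref{lem:abel} serving as the algebraic bridge that converts the constrained maximization into a monotone partial-sum optimization.

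First, I would exploit the hierarchical inclusion chain $\emptyset=\Theta_0 \subseteq \Theta_1 \subseteq \cdots \subseteq \Theta_n \subseteq \Theta_{n+1}=\mathcal{I}$ to partition $\mathcal{I}$ into the disjoint shells $S_i := \Theta_i\setminus\Theta_{i-1}$ for $i\in[n+1]$. For any $d\in\mathcal{D}_{\bm{\Theta},\bm{\delta}}$, put $q_i := \Pr_{I\sim d}[I\in S_i]$, so that $\sum_{i=1}^{n+1}q_i=1$, each $q_i\ge 0$, and the accuracy constraints $\Pr_{I\sim d}[I\in\Theta_i]\ge 1-\delta_i$ rewrite as $\sum_{j=1}^{i} q_j\ge 1-\delta_i$ for every $i\in[n]$. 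Since $S_i\subseteq \Theta_i$, every $I\in S_i$ satisfies $\mathrm{ALG}(I)/\mathrm{OPT}(I)\le c_i(A)$ (using the convention $c_{n+1}(A)=r(A)$), so $\mathbb{E}_{I\sim d}[\mathrm{ALG}(I)/\mathrm{OPT}(I)]\le\sum_{i=1}^{n+1}q_i\cdot c_i(A)$.

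Next, I would maximize $\sum_{i=1}^{n+1}q_i c_i(A)$ over feasible $q$ by applying \Cref{lem:abel} with $a_i=q_i$, $b_i=c_i(A)$, which (after using $\sum_i q_i = 1$ and $c_{n+1}(A)=r(A)$) yields
\begin{align*}
\sum_{i=1}^{n+1} q_i c_i(A) \;=\; r(A) - \sum_{i=1}^{n}\Bigl(\sum_{j=1}^{i} q_j\Bigr)\bigl(c_{i+1}(A)-c_i(A)\bigr).
\end{align*}
Because the hierarchy forces each increment $c_{i+1}(A)-c_i(A)$ to be non-negative, this expression is maximized by making every partial sum $\sum_{j\le i}q_j$ as small as its constraint permits, i.e.\ by setting $\sum_{j\le i}q_j = 1-\delta_i$ for each $i\in[n]$. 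Telescoping gives $q_i^*=\delta_{i-1}-\delta_i$ (with $\delta_0=1$, $\delta_{n+1}=0$), and substituting back produces the upper bound $\sum_{i=1}^{n+1}(\delta_{i-1}-\delta_i)c_i(A)=\sum_{i=1}^{n}(\delta_{i-1}-\delta_i)c_i(A)+\delta_n\cdot r(A)$.

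For the matching lower bound, I would construct near-optimal distributions $d_\epsilon$ as follows: for any $\epsilon>0$, pick $I_i\in\Theta_i$ with $\mathrm{ALG}(I_i)/\mathrm{OPT}(I_i)\ge c_i(A)-\epsilon$ for each $i\in[n]$ and $I_{n+1}\in\mathcal{I}$ with ratio at least $r(A)-\epsilon$, and let $d_\epsilon$ place mass $q_i^*=\delta_{i-1}-\delta_i$ on $I_i$. The main subtlety I anticipate is the feasibility check: the suprema defining $c_i(A)$ might be approached only by instances in $\Theta_{i-1}\subseteq\Theta_i$, so the chosen $I_i$ need not lie in the shell $S_i$. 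The hierarchical inclusion saves us, because any $I_i\in\Theta_i$ automatically lies in every $\Theta_k$ with $k\ge i$, so $\Pr_{d_\epsilon}[I\in\Theta_k]\ge\sum_{i\le k}q_i^* = 1-\delta_k$, and coverage constraints are only tightened. Taking $\epsilon\to 0$ matches the upper bound and completes the proof.
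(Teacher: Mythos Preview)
Your proposal is correct and follows essentially the same approach as the paper's proof: both partition $\mathcal{I}$ into the shells $\Theta_i\setminus\Theta_{i-1}$, bound the ratio on each shell by $c_i(A)$, apply Abel's lemma to convert the sum into a form where the accuracy constraints and the monotonicity $c_i(A)\le c_{i+1}(A)$ yield the upper bound, and then construct an $\epsilon$-optimal distribution by placing mass $\delta_{i-1}-\delta_i$ on a near-extremal instance in each $\Theta_i$. The only cosmetic difference is that the paper uses a multiplicative slack $(1-\epsilon)c_i(A)$ rather than your additive $c_i(A)-\epsilon$, and you make the feasibility verification for $d_\epsilon$ slightly more explicit than the paper does.
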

\begin{proof}
    For each $i\in[n+1]$, let $\mathbbm{1}_{i}$ denote the indicator function of the subset $\Theta_i\setminus\Theta_{i-1}\subseteq\mathcal{I}$. 
    Additionally, define $p^{(d)}_i=\mathbb{E}_{I\sim d}[\mathbbm{1}_i(I)]$ as the probability that an instance $I$ drawn from an unknown distribution $d$ belongs to $\Theta_i\setminus\Theta_{i-1}$.
    Then, for each $i\in[n+1]$, the sum $\sum_{j=1}^i p_j^{(d)}$ is the probability that an instance $I$ drawn from $d$ belongs to $\Theta_i$, and it is at least $1-\delta_i$ if $d\in \mathcal{D}_{\bm{\Theta},\bm{\delta}}$.

    We first provide the upper bound on the DRCR.
    We can express the DRCR as follows:
\begin{align}
    \mathrm{DRCR}_{\bm{\Theta},\bm{\delta}}(A)&=\sup_{d\in\mathcal{D}_{\bm{\Theta},\bm{\delta}}}\mathbb{E}_{I\sim d}\left[\frac{\mathrm{ALG}(I)}{\mathrm{OPT}(I)}\right]\\
    &=\sup_{d\in\mathcal{D}_{\bm{\Theta},\bm{\delta}}}\mathbb{E}_{I\sim d}\Bigg[\sum_{i=1}^{n+1}\mathbbm{1}_{i}(I)\cdot\frac{\mathrm{ALG}(I)}{\mathrm{OPT}(I)}\Bigg]\\
    &\le\sup_{d\in\mathcal{D}_{\bm{\Theta},\bm{\delta}}}\mathbb{E}_{I\sim d}\Bigg[\sum_{i=1}^{n+1}\mathbbm{1}_{i}(I)\cdot c_i(A)\Bigg] \\
    &=\sup_{d\in\mathcal{D}_{\bm{\Theta},\bm{\delta}}}\Bigg[\sum_{i=1}^{n+1}\mathbb{E}_{I\sim d}[\mathbbm{1}_i(I)]\cdot c_i(A)\Bigg]\\
    &=\sup_{d\in\mathcal{D}_{\bm{\Theta},\bm{\delta}}}\Bigg[\sum_{i=1}^{n+1} p^{(d)}_i\cdot c_i(A)\Bigg].
\end{align}
By applying \Cref{lem:abel}, we have
\begin{align}
\sup_{d\in\mathcal{D}_{\bm{\Theta},\bm{\delta}}}\Bigg[\sum_{i=1}^{n+1} p^{(d)}_i\cdot c_i(A)\Bigg]
    &=\sup_{d\in\mathcal{D}_{\bm{\Theta},\bm{\delta}}}\Bigg[\sum_{j=1}^{n+1} p^{(d)}_j c_{n+1}(A)-\sum_{i=1}^n\left(\sum_{j=1}^ip^{(d)}_j\right)(c_{i+1}(A)-c_i(A))\Bigg]\\
    &\le c_{n+1}(A)-\sum_{i=1}^n(1-\delta_i)\cdot(c_{i+1}(A)-c_i(A))\\
    &= c_{n+1}(A)-\sum_{i=2}^{n+1}(1-\delta_{i-1})\cdot c_{i}(A)+\sum_{i=1}^n(1-\delta_i)\cdot c_i(A)\\
    &= \sum_{i=1}^n (\delta_{i-1}-\delta_i)\cdot c_i(A)+\delta_n\cdot r(A),
\end{align}
where the inequality follows from the facts that $\sum_{j=1}^ip^{(d)}_j\ge1-\delta_i$ and $c_i\le c_{i+1}$.

Next, we present the lower bound on the DRCR.
Let $\epsilon$ be a positive real. 
For each $i\in[n+1]$, let $I^{(\epsilon)}_i\in\Theta_i$ be an instance such that 
\begin{align}
\ALG(I^{(\epsilon)}_i)/\OPT(I^{(\epsilon)}_i)\ge (1-\epsilon)\cdot c_i(A).
\end{align}
Such an $I^{(\epsilon)}_i$ always exists, since $c_i(A)=\sup_{I\in\Theta_i}\ALG(I)/\OPT(I)$.
Let $d^{(\epsilon)}$ be the distribution that selects $I^{(\epsilon)}$ with probability $\delta_{i-1}-\delta_i$ for each $i\in[n+1]$.
It is straightforward to verify that $d^{(\epsilon)}\in \mathcal{D}_{\bm{\Theta},\bm{\delta}}$.
Then, we have
\begin{align}
    \mathrm{DRCR}_{\bm{\Theta},\bm{\delta}}(A)
    &=\sup_{d\in\mathcal{D}_{\bm{\Theta},\bm{\delta}}}\mathbb{E}_{I\sim d}\left[\frac{\mathrm{ALG}(I)}{\mathrm{OPT}(I)}\right]\\
    &\ge\mathbb{E}_{I\sim d^{(\epsilon)}}\left[\frac{\mathrm{ALG}(I)}{\mathrm{OPT}(I)}\right]\\
    &=\sum_{i=1}^{n+1}\frac{\mathrm{ALG}(I^{(\epsilon)}_{i})}{\mathrm{OPT}(I^{(\epsilon)}_{i})}\cdot(\delta_{i-1}-\delta_i)\\
    &\ge \sum_{i=1}^{n+1}(1-\epsilon)\cdot c_i(A)\cdot(\delta_{i-1}-\delta_i)\\
    &=(1-\epsilon)\cdot\left(\sum_{i=1}^n (\delta_{i-1}-\delta_i)\cdot c_i(A)+\delta_n\cdot r(A)\right).
\end{align}
Taking the limit $\epsilon\to+0$ yields the desired lower bound.
\end{proof}
From this result, we can conclude that the optimal DRCR is non-increasing and concave in $\bm{\delta}$.
This is because for any fixed algorithm $A$, the DRCR is linear in $\delta_i$ and the coefficient of $\delta_i$ is $c_{i-1}(A)-c_i(A)$, which is non-negative. 

\section{Application: Ski rental problem}\label{sec:skirental}
In this section, we apply the structural properties obtained in \Cref{sec:DRCRstructures} to the ski rental problem with hierarchical prediction intervals.
Recall that, in this problem, the rental cost is $1$ per day and the purchase cost is an integer $B$.
The set of instances $\mathcal{I}$ consists of all positive integers.
The prediction profile $(\bm{\Theta},\bm{\delta})$ specifies intervals $\Theta_i=[\ell_i,u_i]$ for each $i\in[n]$, and these intervals satisfy $\ell_n\le\dots\le\ell_1\le u_1\le\dots\le u_n$.

In \Cref{sec:SkiRentalLPform}, we present an LP-based method to compute an optimal randomized algorithm for the ski rental problem with hierarchical prediction intervals.
In \Cref{sec:CriticalAccuracy}, building on the LP from \Cref{sec:SkiRentalLPform}, we give a method to compute the prediction accuracy required for the optimal DRCR to attain a desired value. In \Cref{sec:CriticalDelta}, using the method given in \Cref{sec:CriticalAccuracy}, we derive a necessary condition on the prediction accuracy under which, in the single-interval case, the optimal DRCR strictly improves upon the robustness value.

\subsection{LP formulation of the optimal algorithm}\label{sec:SkiRentalLPform}
We begin by presenting a method for computing an optimal randomized algorithm for the ski rental problem with hierarchical prediction intervals.
We note that, in the special case with a single prediction interval, \citet{SHCHWB24} have already provided a method for obtaining an optimal randomized algorithm for this problem.
We extend their framework to hierarchical prediction intervals and derive an optimal randomized algorithm in this more general setting.
Any randomized algorithm for this problem can be represented as the purchase-day distribution over the day $t$ on which a purchase is made.
We denote by $f_t$ the probability of purchasing the skis on day $t$. 
If the number of skiing days is $\tau$, then buying on day $t\le\tau$ incurs a cost of $(t-1)+B$ (rent through day $t-1$, then purchase on day $t$), whereas buying on day $t>\tau$ means no purchase occurs by day $\tau$, so the cost is $\tau$.
Hence, if the number of skiing days is $\tau$, the expected cost is
$$\sum_{t=1}^{\tau}(t-1+B)f_t+\tau\left(1-\sum_{t=1}^{\tau} f_t\right).$$
Our goal is to design an algorithm with a low expected cost.
To this end, we adopt the DRCR as our objective and aim to compute an algorithm that minimizes the DRCR for a given prediction profile.

Before we turn to the main results, we show that any randomized algorithm that rents skis forever with a positive probability can be excluded from consideration.
Such algorithms can be represented by setting $f_\infty>0$, where $f_\infty$ denotes the probability that the algorithm never buys the skis.
Note that $f_\infty=1-\sum_{t\in\mathcal{I}}f_t$.
Thus, $f_\infty>0$ if and only if $\sum_{t\in\mathcal{I}}f_t<1$.

If $\delta_n=0$, then the number of skiing days must be at most $u_n$.
Hence, any algorithm that might rent forever is equivalent to one that instead buys on day $u_n+1$ with the same probability.
Thus, in this case, there exists an optimal algorithm for which $f_{\infty}=0$.

Now, suppose that $\delta_n>0$ and fix any algorithm with $f_\infty>0$.
Consider an instance distribution that draws $\lceil 2B/(f_\infty\delta_n)\rceil\in\mathcal{I}$ with probability $\delta_n$; this distribution belongs to $\mathcal{D}_{\bm{\Theta},\bm{\delta}}$.
On the event that $\lceil 2B/(f_\infty\delta_n)\rceil$ is drawn, the probability that the algorithm continues renting until the end is at least $f_\infty$, and hence incurs a cost at least $\lceil 2B/(f_\infty\delta_n)\rceil$.
Thus, the DRCR of the algorithm is at least
\begin{align}
\delta_n\cdot \frac{f_\infty\cdot \lceil 2B/(f_\infty\delta_n)\rceil}{\min\{\lceil 2B/(f_\infty\delta_n)\rceil,B\}}
\ge \delta_n\cdot\frac{f_\infty\cdot 2B/(f_\infty\delta_n)}{B}
=2.
\end{align}
On the other hand, the deterministic algorithm that buys the skis on day $B$ achieves a robustness of $2-1/B$~\citep{KMRS88}.
Therefore, no optimal randomized algorithm can satisfy $f_\infty>0$.

We now introduce an infinite-dimensional LP, and then show that it can be reduced to an LP with finitely many variables and constraints.
In the infinite-dimensional LP constructed below, we treat the consistencies $c_1,c_2,\ldots,c_{n+1}$ (with $c_{n+1}$ representing robustness) as variables, together with $(f_t)_{t\in\mathcal{I}}$.
For each $\tau\in\mathcal{I}$, let $i(\tau)$ be the unique index $i\in[n+1]$ such that $\tau\in\Theta_i\setminus\Theta_{i-1}$, where $\Theta_0=\emptyset$ and $\Theta_{n+1}=\mathcal{I}$.
Recall that, if the number of skiing days is $\tau$, the expected cost of the algorithm and the offline optimal cost are $\sum_{t=1}^\tau (t-1+B)f_t+\tau(1-\sum_{t=1}^\tau f_t)$ and $\min\{\tau,B\}$, respectively.
Thus, the consistency of the algorithm for consistent set $\Theta_i$~($i\in[n+1]$) is given by
\begin{align}\label{eq:c_iinduction}
    c_i
    &=\max_{\tau\in\Theta_i}\frac{\sum_{t=1}^\tau (t-1+B)f_t+\tau(1-\sum_{t=1}^\tau f_t)}{\min\{\tau,B\}}\\
    &=\max\left\{c_{i-1},\, \max_{\tau\in\Theta_i\setminus\Theta_{i-1}}\frac{\sum_{t=1}^\tau (t-1+B)f_t+\tau(1-\sum_{t=1}^\tau f_t)}{\min\{\tau,B\}}\right\},
\end{align}
where $c_0$ is defined to be $1$.
Thus, the DRCR can be represented as $\sum_{i=1}^{n+1}(\delta_{i-1}-\delta_i)\cdot c_i$ by \Cref{thm:DRCRConvexCombination}.
Using this notation, the problem of finding an online algorithm that minimizes DRCR reduces to the following LP:
\begin{subequations}
\begin{align}
\renewcommand{\arraystretch}{1.8}
     \min \quad& \sum_{i=1}^{n+1}(\delta_{i-1}-\delta_i)\cdot c_i & \tag{\theparentequation}\label{eq:Inf-LP-SkiRental}\\
     \text{s.t.}\quad& \textstyle\sum_{t=1}^{\tau}(t-1+B)f_t+\tau\left(1-\sum_{t=1}^{\tau} f_t\right)
    \le \min\{\tau,B\}\cdot c_{i(\tau)} && (\tau\in\mathcal{I}), \label{eq:Inf-LP-SkiRental-a}\\
     & c_i\le c_{i+1}              && (i\in[n]), \label{eq:Inf-LP-SkiRental-b}\\
     & \textstyle\sum_{t\in\mathcal{I}}f_t=1, &                  \label{eq:Inf-LP-SkiRental-c}\\
     & f_t\ge 0\ (t\in S),\quad c_i\ge 0\ (i\in[n+1]). 
\end{align}
\end{subequations}
Constraint \eqref{eq:Inf-LP-SkiRental-c}, together with $f_t\ge0$ for all $t\in\mathcal{I}$, ensures that $(f_t)_{t\in\mathcal{I}}$ forms a valid probability distribution over purchase days.

We now show that the optimal solution to LP~\eqref{eq:Inf-LP-SkiRental} indeed yields an online algorithm that minimizes the DRCR.
For any randomized algorithm $(f_t)_{t\in\mathcal{I}}$ and its consistencies $(c_i)_{i\in[n+1]}$ defined in~\eqref{eq:c_iinduction}, the pair $((f_t)_{t\in\mathcal{I}},(c_i)_{i\in[n+1]})$ satisfies the constraints in \eqref{eq:Inf-LP-SkiRental-a} and \eqref{eq:Inf-LP-SkiRental-b}.
The objective is $\sum_{i=1}^{n+1}(\delta_{i-1}-\delta_i)c_i$, which equals the DRCR of the randomized algorithm by Theorem~\ref{thm:DRCRConvexCombination}.
Thus, the optimal value of LP~\eqref{eq:Inf-LP-SkiRental} is at most the minimum DRCR value. Conversely, for any feasible solution $((f_t)_{t\in\mathcal{I}},(c'_i)_{i\in[n+1]})$, the corresponding randomized algorithm $(f_t)_{t\in\mathcal{I}}$ achieves a DRCR of at most $\sum_{i=1}^{n+1}(\delta_{i-1}-\delta_i)c'_i$. 
To see this, define $(c_i)_{i\in[n+1]}$ as the consistencies induced by the algorithm.
We prove $c'_i\ge c_i$ for all $i\in[n+1]$ by induction.
Let $c'_0=c_0=1$. 
Assume that $c'_{i-1} \ge c_{i-1}$ for some $i\in[n+1]$, then
\begin{align}
c'_i
&\ge\max\left\{c'_{i-1},\,\max_{\tau\in\Theta_i\setminus\Theta_{i-1}}\frac{\sum_{t=1}^\tau (t-1+B)f_t+\tau(1-\sum_{t=1}^\tau f_t)}{\min\{\tau,B\}}\right\}\\
&\ge\max\left\{c_{i-1},\,\max_{\tau\in\Theta_i\setminus\Theta_{i-1}}\frac{\sum_{t=1}^\tau (t-1+B)f_t+\tau(1-\sum_{t=1}^\tau f_t)}{\min\{\tau,B\}}\right\}
=c_i,
\end{align}
where the first inequality follows from the constraints in \eqref{eq:Inf-LP-SkiRental-a} and \eqref{eq:Inf-LP-SkiRental-b}, the second inequality holds by the induction hypothesis, and the last equality is by \eqref{eq:c_iinduction}.
Thus, $c'_i\ge c_i$ holds for all $i\in[n+1]$, and the DRCR $\sum_{i=1}^{n+1}(\delta_{i-1}-\delta_i)c_i$ is at most $\sum_{i=1}^{n+1}(\delta_{i-1}-\delta_i)c'_i$.
Therefore, the optimal value of LP~\eqref{eq:Inf-LP-SkiRental} equals the minimum DRCR value achievable.
Moreover, for any optimal solution to the LP, the corresponding online algorithm attains the minimum DRCR.

To make LP~\eqref{eq:Inf-LP-SkiRental} computationally tractable, we exploit the structure of the problem to derive an equivalent finite-dimensional LP.
This is necessary because LP~\eqref{eq:Inf-LP-SkiRental} has infinitely many variables and constraints, making standard LP solvers inapplicable in general.
This reduction relies on the following structural property: there exists at least one optimal purchase-day distribution whose support (the set of days with positive purchase probability) can be restricted to a specific subset of days determined by the predictions $\bm\Theta$ and the purchase cost $B$, and with size is at most $B+2n+1$.
We now denote such a restricted subset of days by $S$.
Furthermore, we denote by $T$ the subset of instances that needs to be considered in the reduced LP.
Specifically, $S$ and $T$ are defined as follows: 
\begin{itemize}
\item If $B<u_n$, let $j=\min\{i\in[n]\mid u_i>B\}$. Define $S\coloneqq\{1,2,\dots,B\}\cup\{u_j+1,u_{j+1}+1,\dots,u_n+1\}$ and $T\coloneqq\{1,2,\ldots,B\}\cup\{\ell_i-1,\,u_i\mid i\in[n]\}\cup\{u_n+1\}$. 
\item If $u_n\le B$, define $S\coloneqq[B]$ and $T\coloneqq[B+1]$. 
\end{itemize}
Based on this notation, we define the finite-dimensional LP:
\begin{subequations}
\begin{align}
\renewcommand{\arraystretch}{1.8}
     \min \quad& \sum_{i=1}^{n+1}(\delta_{i-1}-\delta_i)\cdot c_i \tag{\theparentequation}\label{eq:ReducedSkiRental}\\
     \text{s.t.} \quad&\textstyle\sum_{t\in S:\, t\le\tau}(t-1+B)\cdot f_t+\tau\left(1-\sum_{t\in S:\, t\le\tau} f_t\right)\le \min\{\tau,B\}\cdot c_{i(\tau)} && (\tau\in T),  \tag{$x_\tau$}\\
     & c_i\le c_{i+1} && (i\in[n]), \tag{$y_i$}\\
     & \textstyle\sum_{t\in S}f_t=1,  \tag{$z$}\\
     & f_t\ge 0~(t\in S),\quad c_i\ge 0~(i\in[n+1]).
\end{align}
\end{subequations}
\begin{thm}\label{thm:LPpoly}
    The infinite-dimensional LP~\eqref{eq:Inf-LP-SkiRental} can be reduced to the finite-dimensional LP~\eqref{eq:ReducedSkiRental}.
\end{thm}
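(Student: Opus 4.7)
The strategy is to show that LPs~\eqref{eq:Inf-LP-SkiRental} and~\eqref{eq:ReducedSkiRental} have the same optimal value. Since the reduced LP restricts $f_t = 0$ for $t \notin S$ while dropping constraints for $\tau \notin T$, neither feasible region is contained in the other, so I will establish both inequalities separately.

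For the direction that the optimal value of~\eqref{eq:ReducedSkiRental} is at least that of~\eqref{eq:Inf-LP-SkiRental}, I take any feasible $((f_t)_{t \in S}, (c_i))$ of~\eqref{eq:ReducedSkiRental} and extend it by setting $f_t = 0$ for $t \notin S$, obtaining a candidate for the infinite LP with the same objective. The only thing to verify is constraint~\eqref{eq:Inf-LP-SkiRental-a} at $\tau \notin T$. Because $\{1, \dots, B\} \subseteq T$, any such $\tau$ exceeds $B$, so $\min\{\tau, B\} = B$. The LHS is piecewise linear in $\tau$ with breakpoints among the support of $f$ (a subset of $S$), and $i(\tau)$ changes only at points in $\{u_i\}_i \cup \{\ell_i - 1\}_i \cup \{u_n + 1\}$. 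Hence, on each maximal sub-interval of constant $i(\tau)$ that contains no support of $f$ in its interior, the ratio is linear in $\tau$ and attains its maximum at an endpoint, which by construction lies in $T$. The constraint at $\tau$ then follows from the constraints at points of $T$.

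For the reverse direction, I will show that some optimal solution of~\eqref{eq:Inf-LP-SkiRental} has $f$-support contained in $S$; its restriction then yields a feasible solution of~\eqref{eq:ReducedSkiRental} with the same objective. Starting from any optimal $(f^*, c^*)$, I redistribute mass at $t \notin S$ to points in $S$ while weakly decreasing or preserving each $c^*_i$. Any mass at $t > u_n + 1$ is shifted to $u_n + 1$: since $\mathrm{cost}(t;\tau) = (t - 1 + B)\mathbbm{1}[t \le \tau] + \tau\,\mathbbm{1}[t > \tau]$ is nondecreasing in $t$ for $t \le \tau$ and unaffected otherwise, this weakly decreases $\mathrm{Cost}(\tau)$ for every $\tau$. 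For mass at $t \in (u_k + 1, u_{k+1}]$ with $j \le k \le n-1$, I use a randomized split between $u_k + 1$ and $u_{k+1} + 1$ with weight $\alpha = (u_{k+1} - t + 1)/(u_{k+1} - u_k)$, tuned so that $\mathrm{Cost}(\tau)$ is preserved for every $\tau \le u_k$ and every $\tau \ge u_{k+1} + 1$; a case analysis on the intermediate range $[u_k + 1, u_{k+1}] \subseteq \Theta_{k+1}$ shows that the maximum of the new ratio on this range does not exceed that of the old one, so $c^*_{k+1}$ remains valid.

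The main obstacle is the case $t \in (B, u_j]$, where the two nearest points of $S$ are $B$ and $u_j + 1$; the analogous split between these points generally increases $\mathrm{Cost}(\tau)$ at $\tau = B$ by a positive amount, potentially violating a tight consistency constraint for some $c^*_i$ with $B \in \Theta_i$. I plan to resolve this by coordinating the split with a simultaneous perturbation of the pre-existing mass of $f^*$ on $\{1, \dots, B\} \subseteq S$, tuning both changes so that $\mathrm{Cost}(B)$ is preserved while the constraints at other $\tau$ remain satisfied. Carrying out this joint exchange and then iterating it over all $t \notin S$ is the central technical step; once verified, the resulting optimal solution has support in $S$ and becomes a feasible solution of~\eqref{eq:ReducedSkiRental} with the same objective, completing the proof.
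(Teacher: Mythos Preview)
Your first direction is essentially right, though the phrasing in terms of piecewise linearity is slightly off: what you really use is that the left-hand side of $C_\tau$ is nondecreasing in $\tau$, while for $\tau>B$ the right-hand side equals $B\cdot c_{i(\tau)}$ and is constant on each stratum $\Theta_i\setminus\Theta_{i-1}$, whose maximal element lies in $T$.

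The second direction, however, has a genuine gap. You yourself flag the case $t\in(B,u_j]$ as ``the central technical step'' and then leave it as a plan (``coordinating the split with a simultaneous perturbation of the pre-existing mass of $f^*$ on $\{1,\dots,B\}$''). That plan is not a proof, and it is not clear it can be made to work: the split sends part of the mass to $u_j+1$, where it now contributes to strata with larger index, and simultaneously you must repair a possibly tight constraint at $\tau=B$ by moving other mass inside $\{1,\dots,B\}$. There is no reason that mass should be available in the right proportions. You also never treat the case $u_n\le B$, where $S=[B]$ and $u_n+1\notin S$, so your rule ``shift mass at $t>u_n+1$ to $u_n+1$'' does not even land in $S$.

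The paper avoids the obstacle entirely by never splitting: it always shifts mass \emph{leftward by one day}. The single lemma it proves is that whenever $\mathrm{RHS}(\bm c,\tau^*)\ge \mathrm{RHS}(\bm c,\tau^*+1)$, transferring all mass from day $\tau^*+1$ to day $\tau^*$ keeps the solution feasible. The only nontrivial constraint is $C_{\tau^*}$ itself, and it is handled by
\[
\mathrm{LHS}(\hat{\bm f},\tau^*)\;\le\;\mathrm{LHS}(\hat{\bm f},\tau^*+1)\;\le\;\mathrm{RHS}(\bm c,\tau^*+1)\;\le\;\mathrm{RHS}(\bm c,\tau^*),
\]
the first inequality holding because $\hat f_{\tau^*+1}=0$. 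For $\tau^*\ge B$ this RHS-monotonicity condition reduces to $c_{i(\tau^*)}\ge c_{i(\tau^*+1)}$, which holds on each interval where $i(\cdot)$ is constant; the paper then sweeps leftward through $\{B,\dots,u_j\}$ and through each $\{u_i+1,\dots,u_{i+1}\}$, depositing all mass at the left endpoint ($B$, respectively $u_i+1$), which is already in $S$. This eliminates the need for any two-sided split or joint perturbation, and the same lemma handles the case $u_n\le B$ by sweeping everything onto $[B]$.
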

The proof is deferred to Appendix~\ref{subsec:LPpoly}. 
As a consequence, an optimal randomized algorithm for the ski rental problem with hierarchical prediction intervals can be obtained by solving LP~\eqref{eq:ReducedSkiRental}, which has $O(B+n)$ variables and constraints.
This LP can be solved using standard LP solvers (e.g., those based on interior-point methods~\citep{K84}).

\subsection{A polyhedral characterization of prediction accuracies}\label{sec:CriticalAccuracy}
In the previous section, we constructed an LP that computes a randomized algorithm achieving the optimal DRCR for the ski rental problem with hierarchical prediction intervals.
A natural next question is which prediction profiles lead to an improved optimal DRCR value.
As a preliminary step to answering this question, we characterize the set of prediction accuracies under which an algorithm can achieve the prescribed performance level.
Specifically, by fixing hierarchical prediction intervals $\bm\Theta=(\Theta_i)_{i\in[n]}$ and a target optimal DRCR value $v$, we characterize the set of prediction accuracies $\bm\delta=(\delta_i)_{i\in[n]}$ for which the optimal DRCR is at least $v$ via a system of linear inequalities.

We construct the system of linear inequalities based on the dual of LP~\eqref{eq:ReducedSkiRental}.
There are two reasons for adopting this dual-based approach.
First, in a minimization problem, it is difficult to directly express the condition that the optimal value of the objective function is at least $v$, whereas in the dual (maximization) problem, this condition can be expressed directly by requiring the objective value to be at least $v$.
Second, if we construct the system based on the primal problem while treating $\bm\delta$ as decision variables, the term $\sum_{i=1}^{n+1}(\delta_{i-1}-\delta_i)\cdot c_i$ contains products of variables, making the system nonlinear. In contrast, a construction based on the dual problem avoids this issue and allows the system of inequalities to remain linear.

We use the labels of the primal constraints (i.e., $(x_\tau)_{\tau\in T}$, $(y_i)_{i\in[n]}$, and $z$) as dual variables.
The dual of LP~\eqref{eq:ReducedSkiRental} is given by
\begin{subequations}
\begin{align}
\renewcommand{\arraystretch}{1.8}
     \max \quad& z+\sum_{\tau\in T}\tau\cdot x_\tau \tag{\theparentequation}\label{eq:ReducedSkiRentalDual}\\
     \text{s.t.} \quad&\textstyle z+\sum_{\tau\in T:\, \tau\ge t}\left(\tau-(t-1+B)\right)\cdot x_\tau\le0 && (t\in S),  \tag{$f_t$}\\
     &\textstyle \sum_{\tau\in T\cap(\Theta_i\setminus\Theta_{i-1})}\min\{\tau,B\}\cdot x_\tau+y_{i-1}-y_{i}
      \le\delta_{i-1}-\delta_i && (i\in[n+1]), & \tag{$c_i$} \\
     & x_\tau\ge 0~(\tau\in T),\quad y_i\ge 0~(i\in[n]).
\end{align}
    where $y_0$ and $y_{n+1}=0$ set to be $0$.
\end{subequations}

Then, we drop the objective function of LP~\eqref{eq:ReducedSkiRentalDual} and instead impose the constraint that the objective value is at least a target value (i.e., $v\le z+\sum_{\tau\in T}\tau\cdot x_\tau$), thereby obtaining a system of linear inequalities.
Under this operation, $z$ becomes effectively an auxiliary variable, and we can eliminate it from the system by combining constraints $(f_t)$.
In addition, we treat $(\delta_i)_{i\in[n]}$ as decision variables and append the constraints $1\ge\delta_1\ge\delta_2\ge\cdots\ge\delta_n\ge0$, which are implied by the hierarchical prediction structure. 
This yields a system of linear inequalities that characterizes the feasible region of prediction accuracies $(\delta_i)_{i\in[n]}$ for which the optimum DRCR is at least $v$.
Formally, the system is
\begin{align}
\renewcommand{\arraystretch}{1.5}
\left\{
\begin{array}{ll}\label{eq:SysLinIneq}
     v\le\sum_{\tau\in T:\,\tau<t}\tau x_\tau-(t-1+B)\sum_{\tau\in T:\, \tau\ge t}x_\tau & (t\in S), \\
     \sum_{\tau\in T\cap(\Theta_i\setminus\Theta_{i-1})}\min\{\tau,B\}\cdot x_\tau+y_{i-1}-y_{i}
      \le\delta_{i-1}-\delta_i & (i\in[n+1]),\\
     1=\delta_0\ge\delta_1\ge\delta_2\ge\cdots\ge\delta_n\ge \delta_{n+1}=0,\\
     x_\tau\ge0\ (\tau\in T),\quad y_i\ge0\ (i\in[n]),\quad y_0=y_{n+1}=0.
\end{array}
\right.
\end{align}
Then, we obtain the following theorem.

\begin{thm}\label{thm:accuracies}
Fix hierarchical prediction intervals $\bm{\Theta}=(\Theta_i)_{i\in[n]}$ and a target DRCR value $v$.
Then, the set of prediction accuracies $\bm{\delta}=(\delta_i)_{i\in[n]}$ for which the optimal DRCR under the prediction profile $(\bm{\Theta},\bm{\delta})$ is at least $v$ is characterized by the linear system \eqref{eq:SysLinIneq}.
\end{thm}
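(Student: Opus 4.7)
The plan is to derive the characterization via strong LP duality, leveraging that by \Cref{thm:LPpoly} the optimal DRCR under $(\bm{\Theta},\bm{\delta})$ equals the optimum of LP~\eqref{eq:ReducedSkiRental}. Since the paper already writes down the dual LP~\eqref{eq:ReducedSkiRentalDual}, my first step is to verify strong duality and rephrase the threshold condition in dual terms. The primal is feasible (any randomized ski-rental purchase distribution together with its induced consistencies gives a feasible point) and the optimum is finite (non-negative and upper bounded by the worst-case robustness $2-1/B$), so strong duality applies and gives the equivalence
\begin{align*}
\text{``optimal DRCR under }(\bm{\Theta},\bm{\delta}) \ge v\text{''} \;\iff\; \exists (x,y,z) \text{ feasible in \eqref{eq:ReducedSkiRentalDual}},\; z + \textstyle\sum_{\tau\in T} \tau\cdot x_\tau \ge v.
\end{align*}

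Next, I would eliminate the free variable $z$ by Fourier--Motzkin projection. The variable $z$ occurs only in the $(f_t)$ constraints, which bound it from above by $\sum_{\tau\ge t}((t-1+B)-\tau)x_\tau$ for each $t\in S$, and in the added threshold, which bounds it from below by $v-\sum_{\tau} \tau\,x_\tau$. Projecting out $z$ yields, for each $t\in S$, a single linear inequality that, after splitting $\sum_\tau \tau x_\tau=\sum_{\tau<t}\tau x_\tau+\sum_{\tau\ge t}\tau x_\tau$ and collecting terms, matches the first block of \eqref{eq:SysLinIneq}. The $(c_i)$ constraints from the dual are already linear jointly in $(x,y)$ and $\bm{\delta}$, so once $\bm{\delta}$ is promoted to a decision variable they can be kept verbatim. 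Appending the hierarchy $1=\delta_0\ge\delta_1\ge\cdots\ge\delta_n\ge0=\delta_{n+1}$ that is built into the model then completes exactly the system \eqref{eq:SysLinIneq}.

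I expect the main obstacle to be carefully checking both directions of the $z$-elimination step. The forward direction (optimal DRCR $\ge v$ implies \eqref{eq:SysLinIneq} is feasible for some $(x,y)$) is immediate: strong duality produces a dual triple $(x,y,z)$, and combining the lower bound on $z$ with each upper bound from $(f_t)$ yields the projected inequalities with the original $(x,y)$. The reverse direction, which justifies the ``if'' part of the characterization, requires explicitly reconstructing $z$: given $(x,y,\bm{\delta})$ satisfying \eqref{eq:SysLinIneq}, one sets $z$ to the pointwise minimum over $t\in S$ of the $(f_t)$ upper bounds, which by construction satisfies all $(f_t)$ constraints, is independent of the $(c_i)$ constraints, and achieves $z+\sum_\tau \tau x_\tau\ge v$ thanks to the projected inequalities. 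Apart from this bookkeeping, no additional non-trivial ingredient is needed.
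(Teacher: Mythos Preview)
Your approach is correct and is exactly the one the paper takes in Section~\ref{sec:CriticalAccuracy}: pass to the dual~\eqref{eq:ReducedSkiRentalDual}, replace the objective by the threshold constraint $z+\sum_\tau \tau x_\tau\ge v$, Fourier--Motzkin out the free variable $z$ against the $(f_t)$ constraints, and append the hierarchy constraints on $\bm\delta$. One small point worth noting when you write it up: the elimination you describe actually yields $v\le\sum_{\tau<t}\tau x_\tau+(t-1+B)\sum_{\tau\ge t}x_\tau$ with a plus sign (consistent with~\eqref{eq:LPforCriticalDelta}), so the minus sign printed in~\eqref{eq:SysLinIneq} is a typo rather than something your derivation should reproduce.
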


\subsection{Applications of the structure and characterization}\label{sec:CriticalDelta}
We integrate the results obtained in the previous sections to derive conditions that characterize a certain class of desirable prediction profiles for the ski rental problem.
For simplicity, we focus on the single prediction interval setting, where the last day of skiing $\tau$ belongs to the interval $[\ell,u]$ with probability at least $1-\delta$, and we write $\bm{\Theta}=\{\Theta=[\ell,u]\}$, $\bm{\delta}=\{\delta\}$.

To begin, we experimentally observe how the optimal DRCR value varies with prediction accuracy.
Figure~\ref{fig:optDRCR} shows the relationship between $\delta$ and the optimal DRCR for two problem instances.
These plots can be drawn by solving LP~\eqref{eq:ReducedSkiRental} to obtain the optimal DRCR value for each $\delta$.
These two instances are chosen as representative examples, and similar qualitative behavior can be observed for other choices of $B$, $\ell$, and $u$.
From the figure, we confirm that the optimal DRCR has monotonicity (non-decreasing) and concavity in $\delta$, consistent with the results in \Cref{sec:DRCRstructures}.
Moreover, we can observe a threshold value of $\delta$ at which the optimal DRCR value stops increasing and coincides with the optimal robustness value.
This shows that there exist prediction profiles that do not improve the optimal DRCR even when a positive accuracy guarantee is provided, and it naturally leads to the question of which prediction profiles allow the optimal DRCR to be strictly better than the robustness value.

\begin{figure}[ht]
\centering
\begin{minipage}[b]{0.49\columnwidth}
    \centering
    \includegraphics[width=0.9\columnwidth]{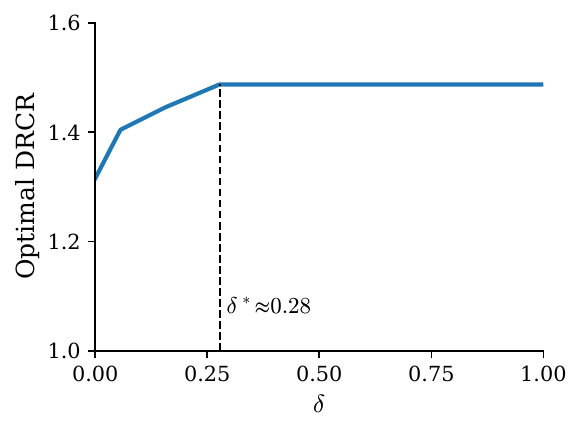}
    \subcaption{$\ell=3$, $u=8$, and $B=5$}
    \label{fig:a}
\end{minipage}
\begin{minipage}[b]{0.49\columnwidth}
    \centering
    \includegraphics[width=0.9\columnwidth]{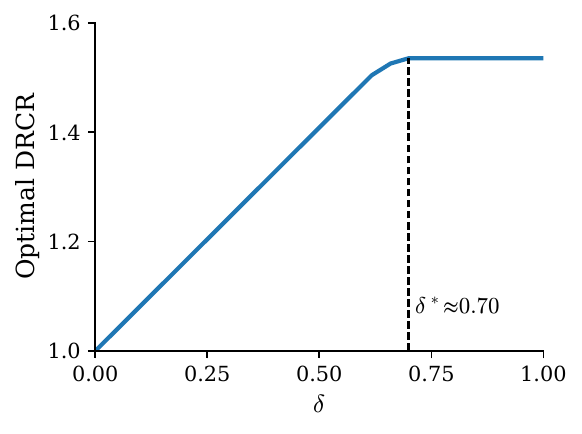}
    \subcaption{$\ell=3$, $u=8$, and $B=10$}
    \label{fig:b}
\end{minipage}
\caption{Examples of the relationship between the optimal DRCR value and $\delta$.}
\label{fig:optDRCR}
\end{figure}

To answer this question, we propose a method to compute, for given $\ell$, $u$, and $B$, the minimum prediction accuracy required for the optimal DRCR to be strictly smaller than the optimal robustness value.
We refer to this threshold as the \emph{critical accuracy} and denote it by $\delta^*$ in what follows.
Formally, with $\ell$, $u$, and $B$ fixed, the critical accuracy $\delta^*$ is defined as the smallest value of $\delta$ for which the optimal DRCR value coincides with the optimal robustness value.
Our method is based on System~\eqref{eq:SysLinIneq}, which characterizes, for fixed prediction set $\bm{\Theta}$ and target value $v$, the set of prediction accuracies $\bm{\delta}$ under which the optimal DRCR is at least $v$.
To compute $\delta^*$, we set the target value $v$ to the optimal robustness value and minimize $\delta$ subject to System~\eqref{eq:SysLinIneq}.
To specify the value of $v$, we observe that the definitions of robustness and of the competitive ratio are the same.
Thus, the optimal robustness value equals the competitive ratio achieved by the optimal randomized algorithm for the classical ski rental problem,
which is given by \citet{K98} as
$$\frac{B^B}{B^B-(B-1)^B}.$$
Summarizing the above discussion, the LP for computing the critical accuracy is given as follows:
\begin{subequations}
\begin{align}
\renewcommand{\arraystretch}{1.8}
     \min \quad& \delta \tag{\theparentequation}\label{eq:LPforCriticalDelta}\\
     \text{s.t.} \quad&\textstyle\frac{B^B}{B^B-(B-1)^B}\le \sum_{\tau\in T:\, \tau<t}\tau x_\tau+(t-1+B)\sum_{\tau\in T:\, \tau\ge t} x_\tau \quad (t\in S),  \\
     &\textstyle \sum_{\tau\in T\cap\Theta}\min\{\tau,B\}\cdot x_\tau-y
      \le1-\delta, \\
     &\textstyle \sum_{\tau\in T\cap(\mathcal{I}\setminus\Theta)}\min\{\tau,B\}\cdot x_\tau+y
      \le\delta, \\
     & x_\tau\ge 0~(\tau\in T),\quad y\ge 0, \quad0\le\delta\le1.
\end{align}
\end{subequations}
Here, we can choose $S=\{1,2,\dots,B,u+1\}$ and $T=\{1,2,\dots,B\}\cup\{\ell-1,u,u+1\}$.
Therefore, this LP has polynomial size and can be solved in polynomial time.

\section{Conclusion}
In this paper, we analyze several structural properties of the relationship between the DRCR and prediction accuracy, and further consider an extension to hierarchical predictions.
As an application of the derived structural properties, we consider the ski rental problem with a prediction interval and derive an LP that provides conditions under which the optimal DRCR is strictly smaller than the optimal robustness value.

As future work, it would be interesting to explore whether our approach for computing $\delta^*$ extends to other online optimization problems whose optimal algorithms are computable through linear programming formulations.
Moreover, a more detailed investigation is needed to understand how predictions should be constructed from given data in order to effectively minimize the DRCR.

Also, practical limitations of hierarchical predictions should be considered.
While hierarchical predictions offer a theoretically advantageous structure, it remains uncertain whether real-world predictors, especially learning models, can produce such predictions.
Therefore, it is important to carefully examine the validity of the hierarchical assumption and any associated limitations.

\bibliographystyle{plainnat} 
\bibliography{references}

@article{A26,
author = {Abel, N.H.},
journal = {Journal für die reine und angewandte Mathematik},
language = {ger},
pages = {311-339},
title = {Untersuchungen über die Reihe: $1+\frac{m}{1}x+\frac{m(m-1)}{1\cdot2}x^2+\cdots$},
volume = {1},
year = {1826},
}

@article{KP00,
  title={Beyond competitive analysis},
  author={Koutsoupias, Elias and Papadimitriou, Christos H},
  journal={SIAM Journal on Computing},
  volume={30},
  number={1},
  pages={300--317},
  year={2000},
  publisher={SIAM}
}

@article{KMMO94,
  title={Competitive randomized algorithms for nonuniform problems},
  author={Karlin, Anna R and Manasse, Mark S and McGeoch, Lyle A and Owicki, Susan},
  journal={Algorithmica},
  volume={11},
  number={6},
  pages={542--571},
  year={1994}
}

@Inbook{K98,
author={Karlin, Anna R.},
title={On the performance of competitive algorithms in practice},
bookTitle={Online Algorithms: The State of the Art},
year={1998},
publisher={Springer Berlin Heidelberg},
pages={373--384}
}

@book{B04,
  title={Convex optimization},
  author={Boyd, Stephen P and Vandenberghe, Lieven},
  year={2004},
  publisher={Cambridge university press}
}

@inproceedings{K84,
  title={A new polynomial-time algorithm for linear programming},
  author={Karmarkar, Narendra},
  booktitle={Proceedings of the sixteenth annual ACM symposium on Theory of computing},
  pages={302--311},
  year={1984}
}

@inproceedings{PSK18,
 author = {Purohit, Manish and Svitkina, Zoya and Kumar, Ravi},
 booktitle = {Advances in Neural Information Processing Systems},
 title = {Improving Online Algorithms via ML Predictions},
 volume = {31},
 year = {2018}
}

@inproceedings{DKTVZ21,
  title={Learning online algorithms with distributional advice},
  author={Diakonikolas, Ilias and Kontonis, Vasilis and Tzamos, Christos and Vakilian, Ali and Zarifis, Nikos},
  booktitle={International Conference on Machine Learning},
  pages={2687--2696},
  year={2021},
  organization={PMLR}
}

@inproceedings{GP19,
  title={Online algorithms for rent-or-buy with expert advice},
  author={Gollapudi, Sreenivas and Panigrahi, Debmalya},
  booktitle={International Conference on Machine Learning},
  pages={2319--2327},
  year={2019},
  organization={PMLR}
}

@article{ADJKR24,
title = {Online computation with untrusted advice},
journal = {Journal of Computer and System Sciences},
volume = {144},
pages = {103545},
year = {2024},
author = {Spyros Angelopoulos and Christoph Dürr and Shendan Jin and Shahin Kamali and Marc Renault},
}

@inproceedings{WZ20,
    author = {Wei, Alexander and Zhang, Fred},
    title = {Optimal robustness-consistency trade-offs for learning-augmented online algorithms},
    volume={33},
    pages={8042--8053},
    booktitle = {Advances in Neural Information Processing Systems},
    year = {2020}
}

@inproceedings{WLW20,
  title={Online algorithms for multi-shop ski rental with machine learned advice},
  author={Wang, Shufan and Li, Jian and Wang, Shiqiang},
  booktitle={Advances in Neural Information Processing Systems},
  volume={33},
  pages={8150--8160},
  year={2020}
}

@inproceedings{B20,
 author = {Banerjee, Shom},
 booktitle = {Advances in Neural Information Processing Systems},
 pages = {21072--21080},
 title = {Improving Online Rent-or-Buy Algorithms with Sequential Decision Making and ML Predictions},
 year = {2020}
}

@article{SVW25,
  title={Algorithms with calibrated machine learning predictions},
  author={Shen, Judy Hanwen and Vitercik, Ellen and Wikum, Anders},
  journal={arXiv preprint arXiv:2502.02861},
  year={2025}
}

@article{EK25,
  title={Online delay management on a single train line with predictions},
  author={Eichhorn, Daniel and Krumke, Sven O},
  journal={Information Processing Letters},
  volume={190},
  pages={106569},
  year={2025},
  publisher={Elsevier}
}

@article{IBB24,
  title={Equilibria in multiagent online problems with predictions},
  author={Istrate, Gabriel and Bonchi{\c{s}}, Cosmin and Bogdan, Victor},
  journal={arXiv preprint arXiv:2405.11873},
  year={2024}
}

@article{SLA23,
  title={On Optimal Consistency-Robustness Trade-Off for Learning-Augmented Multi-Option Ski Rental},
  author={Shin, Yongho and Lee, Changyeol and An, Hyung-Chan},
  journal={arXiv preprint arXiv:2312.02547},
  year={2023}
}

@article{ZTCD24,
  title={Learning-augmented algorithms for the bahncard problem},
  author={Zhao, Hailiang and Tang, Xueyan and Chen, Peng and Deng, Shuiguang},
  journal={Advances in Neural Information Processing Systems},
  volume={37},
  pages={113234--113281},
  year={2024}
}

@inproceedings{BP23,
    author ={Benomar, Ziyad and Perchet, Vianney},
    title = {Advice querying under budget constraint for online algorithms},
    booktitle = {Advances in Neural Information Processing Systems},
    year = {2023},
    volume={36},
    pages={75026--75047}
}

@inproceedings{DNS23,
  title={Online algorithms with costly predictions},
  author={Drygala, Marina and Nagarajan, Sai Ganesh and Svensson, Ola},
  booktitle={International Conference on Artificial Intelligence and Statistics},
  pages={8078--8101},
  year={2023},
  organization={PMLR}
}

@inproceedings{MLHLSL23,
  title={Applied online algorithms with heterogeneous predictors},
  author={Maghakian, Jessica and Lee, Russell and Hajiesmaili, Mohammad and Li, Jian and Sitaraman, Ramesh and Liu, Zhenhua},
  booktitle={International Conference on Machine Learning},
  pages={23484--23497},
  year={2023},
  organization={PMLR}
}

@inproceedings{SLLA23,
  title={Improved learning-augmented algorithms for the multi-option ski rental problem via best-possible competitive analysis},
  author={Shin, Yongho and Lee, Changyeol and Lee, Gukryeol and An, Hyung-Chan},
  booktitle={International Conference on Machine Learning},
  pages={31539--31561},
  year={2023},
  organization={PMLR}
}

@inproceedings{BD22,
  title={Machine learning advised ski rental problem with a discount},
  author={Bhattacharya, Arghya and Das, Rathish},
  booktitle={International Conference and Workshops on Algorithms and Computation},
  pages={213--224},
  year={2022},
  organization={Springer}
}

@inproceedings{ACEPS21,
  title={Learning-augmented dynamic power management with multiple states via new ski rental bounds},
  author={Antoniadis, Antonios and Coester, Christian and Eli{\'a}s, Marek and Polak, Adam and Simon, Bertrand},
  booktitle={Advances in neural information processing systems},
  volume={34},
  pages={16714--16726},
  year={2021}
}

@inproceedings{AGP20,
  title={Customizing ML predictions for online algorithms},
  author={Anand, Keerti and Ge, Rong and Panigrahi, Debmalya},
  booktitle={International Conference on Machine Learning},
  pages={303--313},
  year={2020},
  organization={PMLR}
}

@inproceedings{BMS20,
  title={The primal-dual method for learning augmented algorithms},
  author={Bamas, Etienne and Maggiori, Andreas and Svensson, Ola},
  booktitle={Advances in Neural Information Processing Systems},
  volume={33},
  pages={20083--20094},
  year={2020}
}

@book{T21,
  title={Beyond the worst-case analysis of algorithms},
  author={Roughgarden, Tim},
  year={2021},
  publisher={Cambridge University Press}
}

@incollection{MV20,
  author       = {Michael Mitzenmacher and
                  Sergei Vassilvitskii},
  editor       = {Tim Roughgarden},
  title        = {Algorithms with Predictions},
  booktitle    = {Beyond the Worst-Case Analysis of Algorithms},
  pages        = {646--662},
  publisher    = {Cambridge University Press},
  year         = {2020},
}

@article{LV21,
  title={Competitive caching with machine learned advice},
  author={Lykouris, Thodoris and Vassilvitskii, Sergei},
  journal={Journal of the ACM (JACM)},
  volume={68},
  number={4},
  pages={1--25},
  year={2021},
  publisher={ACM New York, NY}
}

@inproceedings{SHCHWB24,
  title={Online algorithms with uncertainty-quantified predictions},
  author={Sun, Bo and Huang, Jerry and Christianson, Nicolas and Hajiesmaili, Mohammad and Wierman, Adam and Boutaba, Raouf},
  booktitle={Proceedings of the 41st International Conference on Machine Learning},
  pages={47056--47077},
  year={2024}
}

@article{KMRS88,
  title={Competitive snoopy caching},
  author={Karlin, Anna R and Manasse, Mark S and Rudolph, Larry and Sleator, Daniel D},
  journal={Algorithmica},
  volume={3},
  number={1},
  pages={79--119},
  year={1988},
  publisher={Springer}
}

@inproceedings{SR21,
  title={Sequential Ski Rental Problem},
  author={Shah, Anant and Rajkumar, Arun},
  booktitle={Proceedings of the 20th International Conference on Autonomous Agents and MultiAgent Systems},
  pages={1173--1181},
  year={2021}
}

\newpage
\appendix

\section{Deferred proofs}

\subsection{Proof of Theorem~\ref{thm:LPpoly}}\label{subsec:LPpoly}
\begin{proof}
To eliminate redundant variables and constraints from LP~\eqref{eq:Inf-LP-SkiRental}, we first introduce a transformation on feasible solutions.
The operation maps a feasible solution $(\bm{f},\bm{c})=((f_t)_{t\in\mathcal{I}},(c_i)_{i\in[n+1]})$ to another feasible solution $(\bm{\hat{f}},\bm{c})=((\hat{f}_t)_{t\in\mathcal{I}},(c_i)_{i\in[n+1]})$.
Since the consistencies $\bm{c}$ remain unchanged, this operation preserves the objective value.
Let $C_{\tau}$ denote the constraint indexed by $\tau\in\mathcal{I}$ in \eqref{eq:Inf-LP-SkiRental-a}.
For a feasible solution $(\bm{f},\bm{c})$, we denote $\mathrm{LHS}(\bm{f},\tau)$ and $\mathrm{RHS}(\bm{c},\tau)$ as the values of the left-hand side and right-hand side of $C_\tau$, respectively.

We observe that, if there exists a day $\tau^*\in\mathcal{I}$ such that 
\begin{align}
\mathrm{RHS}(\bm{c},\tau^*)\ge\mathrm{RHS}(\bm{c},\tau^*+1),\label{eq:tau*}
\end{align}
then it is possible to construct another feasible solution in which the probability of purchasing on day $\tau^*+1$ is zero, and the objective value does not worsen. Specifically, we consider a solution $(\bm{\hat{f}},\bm{c})$ where
\begin{align}
\hat{f}_t&=
\begin{cases}
f_{\tau^*}+f_{\tau^*+1} & \text{if }t=\tau^*,\\
0                     & \text{if }t=\tau^*+1,\\
f_t               & \text{otherwise}
\end{cases}
&&(t\in\mathcal{I}).\label{eq:probmassmove}
\end{align}
We now verify the feasibility of $(\bm{\hat{f}},\bm{c})$.
The feasibility of constraints other than those in \eqref{eq:Inf-LP-SkiRental-a} is straightforward.
Therefore, it suffices to verify the feasibility only for the constraints in \eqref{eq:Inf-LP-SkiRental-a}.
This verification is carried out by considering the following three cases.
\begin{itemize}
    \item For $\tau\le \tau^*-1$, we have $\hat{f}_t=f_t$ for all $t\le \tau$. Hence, we have
    $$\mathrm{LHS}(\bm{\hat{f}},\tau)=\mathrm{LHS}(\bm{f},\tau)\le \mathrm{RHS}(\bm{c},\tau).$$

    \item For $\tau\ge\tau^*+1$, we have
    \begin{align}
    \mathrm{LHS}(\bm{\hat{f}},\tau)-\mathrm{LHS}(\bm{f_{t}},\tau)
    &=(\tau^*-1+B)(f_{\tau^*}+f_{\tau^*+1})-(\tau^*-1+B)f_{\tau^*}-(\tau^*+B)f_{\tau^*+1}\\
    &=-f_{\tau^*+1}\le 0. 
    \end{align}
    Consequently, we obtain
    $$\mathrm{LHS}(\bm{\hat{f}},\tau)\le \mathrm{LHS}(\bm{f},\tau)\le \mathrm{RHS}(\bm{c},\tau).$$

    \item For $\tau=\tau^*$, we have
    \begin{align}
        \mathrm{LHS}(\bm{\hat{f}},\tau)-\mathrm{LHS}(\bm{\hat{f}},\tau+1)
        &=\tau^*\sum_{t\ge\tau^*+2} f_t-(\tau^*+1)\sum_{t\ge\tau^*+2} f_t
        =-\sum_{t\ge\tau^*+2} f_t\le 0.
    \end{align}
    Hence, we obtain
    \begin{align}
        \mathrm{LHS}(\bm{\hat{f}},\tau)
        &\le \mathrm{LHS}(\bm{\hat{f}},\tau+1)
        \le \mathrm{RHS}(\bm{c},\tau+1)
        \le \mathrm{RHS}(\bm{c},\tau),
    \end{align}
    where the second inequality follows from the discussion for the case $\tau\ge \tau^*+1$, and the last inequality holds by the assumption in \eqref{eq:tau*}.
\end{itemize}
Therefore, the new solution $(\bm{\hat{f}},\bm{c})$ is feasible, 
since the constraints in \eqref{eq:Inf-LP-SkiRental-a} are satisfied as $\mathrm{LHS}(\bm{\hat{f}},\tau)\le \mathrm{RHS}(\bm{c},\tau)$ in every case.

Using this transformation, we eliminate unnecessary variables and constraints from LP~\eqref{eq:Inf-LP-SkiRental}. 
We proceed by analyzing two cases based on the relationship between $B$ and $u_n$: (i) $B<u_n$ and (ii) $u_n\le B$.
For notational simplicity, we set $u_{n+1}=\infty$.

\medskip
\noindent\textbf{Case 1: $B<u_n$.} 
We first eliminate redundant variables from LP~\eqref{eq:Inf-LP-SkiRental}. 
To this end, we establish the following statement about feasible solutions.
Let $j=\min\{i\in[n]\mid u_i>B\}$. 
We claim that, for any feasible solution $(\bm{f},\bm{c})$, there exists another feasible solution $(\bm{\hat{f}},\bm{c})$ with the same objective value such that $\hat{f}_t\ne 0$ only for $t$ in 
$$S\coloneqq\{1,2,\dots,B\}\cup\{u_j+1,u_{j+1}+1,\dots,u_n+1\}.$$
We prove this statement by using the transformation on feasible solutions described above.
For each $\tau\ge B$, we have $\mathrm{RHS}(\bm{c},\tau)=B\cdot c_{i(\tau)}$.
By the hierarchical structure and $B<u_j$, we have $i(B)\ge i(B+1)\ge\dots\ge i(u_j)$. Since $1\le c_1\le\cdots\le c_{n+1}$, it follows that $\mathrm{RHS}(\bm{c},B)\ge \mathrm{RHS}(\bm{c},B+1)\ge\cdots\ge\mathrm{RHS}(\bm{c},u_j)$.
By repeatedly applying transformation~\eqref{eq:probmassmove} for $\tau^*$ from $u_j-1$ to $B$, we can shift the entire probability mass from the interval $\{B+1,B+2,\dots,u_j\}$ onto day $B$ without violating feasibility. 
Similarly, for each $i\in\{j,j+1,\ldots,n\}$, we can move entire probability mass from the interval $\{u_{i}+1,u_{i}+2,\ldots,u_{i+1}\}$ onto day $u_{i}+1$, since we have $\mathrm{RHS}(\bm{c},\tau)=B\cdot c_{i+1}$ for all $\tau\in\{u_i+1,u_i+2,\dots,u_{i+1}\}$.
Thus, we can construct a new feasible solution $(\bm{c},\bm{\hat{f}})$ that achieves the same objective value, where $\bm{\hat{f}}=(\hat{f}_t)_{t\in\mathcal{I}}$ is defined as follows:
\begin{align}
    \hat{f}_t=\begin{cases}
    f_t &\text{if } t\le B-1,\\
    \sum_{k=B}^{u_j} f_k &\text{if } t=B,\\
    \sum_{k=u_i+1}^{u_{i+1}} f_k &\text{if } t=u_i+1\le u_{i+1}\text{ for some }i\in\{j,j+1,\ldots,n\},\\
    0 & \text{otherwise}
    \end{cases}
&&(t\in\mathcal{I}).
\end{align}
This transformation proves that an optimal solution can be found within the subspace where $f_t=0$ for all $t\notin S$.
Therefore, we can eliminate the variables $f_t$ for $t\notin S$ from LP~\eqref{eq:Inf-LP-SkiRental} without loss of optimality.

Next, using the LP with these variables eliminated, we remove redundant constraints from \eqref{eq:Inf-LP-SkiRental-a}.
Define 
$$T\coloneqq\{1,\ldots,B\}\cup\{\ell_i-1,\,u_i\mid i\in[n]\}\cup\{u_n+1\}.$$
We show that all constraints $C_\tau$ with $\tau\notin T$ are redundant.
Let $(\bm{c}, (\hat{f}_t)_{t\in S})$ be a solution to the reduced LP.
To begin with, since $\hat{f}_t=0$ for all $t\ge u_n+2$, all constraints $(C_\tau)_{\tau\ge u_n+1}$ take the same form $\sum_{t\in S}(t-1+B)f_t+\tau(1-\sum_{t\in S}f_t)\le B\cdot c_{n+1}$.
Thus, it suffices to verify the constraint at $\tau=u_n+1$ instead of verifying all constraints $(C_\tau)_{\tau\ge u_n+2}$.
Then, for each $i\in\{j,j+1,\ldots,n\}$, consider the set $(\Theta_i\setminus \Theta_{i-1})\cap\{B,B+1,\ldots\}$.
For every $\tau$ in this set, we have $\mathrm{RHS}(\bm{c},\tau)=B\cdot c_{i(\tau)}$, and $\mathrm{LHS}\big(\bm{\hat{f}},\tau\big)$ is non-decreasing in $\tau$.
Therefore, if there exists $\tau^*\in(\Theta_i\setminus\Theta_{i-1})\cap\{B,B+1,\ldots\}$ such that $\mathrm{LHS}(\bm{\hat{f}},\tau^*)\le\mathrm{RHS}(\bm{c},\tau^*)$ holds, then $\mathrm{LHS}(\bm{\hat{f}},\tau)\le \mathrm{RHS}(\bm{c},\tau)$ automatically holds for every $\tau\in (\Theta_i\setminus\Theta_{i-1})\cap\{B,B+1,\ldots\}$ with $\tau\le\tau^*$.
Therefore, for each $i$, it suffices to verify that the constraint is satisfied at the largest element in $(\Theta_i\setminus \Theta_{i-1})\cap\{B,B+1,\ldots\}$, and such element $\tau^*$ is one of the following:
$$
\tau^*=u_i \ \text{if } u_i>u_{i-1},\qquad
\tau^*=\ell_{i-1}-1 \ \text{if } u_i=u_{i-1}\ \text{and}\ \ell_i<\ell_{i-1}.
$$
Consequently, checking the constraints in $(C_\tau)_{\tau\in T}$ is sufficient. 

\medskip
\noindent\textbf{Case 2: $u_n\le B$.} As in Case 1, we first eliminate redundant variables by the transformation on the solutions, and then remove redundant constraints.
We first show that, for any feasible solution $(\bm{f},\bm{c})$, there exists another feasible solution $(\bm{\hat{f}},\bm{c})$ with the same objective value such that $\hat{f}_t\ne 0$ only for $t$ in 
$$S\coloneqq [B].$$
For all $\tau\ge B$, we have $\mathrm{RHS}(\tau)=c_{n+1}B$.
By repeatedly applying transformation~\eqref{eq:probmassmove} for $\tau^*$ from $\infty$ to $B$, we can shift the entire probability mass from all day $t\ge B+1$ onto day $B$ without violating feasibility. Consequently, we can construct a new feasible solution $(\bm{\hat{f}},\bm{c})$ that achieves the same objective value, where $\bm{\hat{f}}$ is defined as follows:
\begin{align}
    \hat{f}_t=\begin{cases}
        f_t &\text{if } t\le B-1,\\
        \sum_{k=B}^{\infty} f_k~(=1-\sum_{k=1}^{B-1}f_k) &\text{if } t=B,\\
        0 & \text{if }t>B
    \end{cases}
&&(t\in\mathcal{I}).
\end{align}
By the same discussion as above, we can eliminate the variables $f_t$ for $t\notin S$ from LP~\eqref{eq:Inf-LP-SkiRental} without loss of optimality.

Next, using the LP with these variables eliminated, we remove redundant constraints from \eqref{eq:Inf-LP-SkiRental-a}.
We show that all constraints $C_\tau$ with $\tau\not\in T$ are redundant, where
$$T\coloneqq [B+1].$$
Since $\hat{f}_t=0$ for all $t\ge B+1$, it suffices to verify the constraint $C_{B+1}$ to ensure satisfaction for all constraints $(C_\tau)_{\tau\ge B+1}$. 
Consequently, checking the constraints $(C_\tau)_{\tau\in T}$ is sufficient.

\medskip

Combining these cases, we conclude that the infinite-dimensional LP~\eqref{eq:Inf-LP-SkiRental} can be reduced to the finite-dimensional LP~\eqref{eq:ReducedSkiRental}.
\end{proof}

\end{document}